%% file: main.tex
\documentclass{article}
\usepackage[preprint]{corl_2026} 

\usepackage{microtype}
\usepackage{graphicx}
\usepackage{subcaption}
\usepackage{booktabs} 

\usepackage{hyperref}
\usepackage{enumitem}

\usepackage{amsmath}
\usepackage{amssymb}
\usepackage{mathtools}
\usepackage{amsthm}

\usepackage[capitalize,noabbrev]{cleveref}

\theoremstyle{plain}
\newtheorem{theorem}{Theorem}[section]
\newtheorem{proposition}[theorem]{Proposition}

\theoremstyle{definition}
\newtheorem{definition}[theorem]{Definition}

\theoremstyle{remark}

\usepackage{float}
\input{preamble}

\usepackage[textsize=tiny]{todonotes}

\title{\textcolor{myblue}{PriGo}: Test-Time \textcolor{myblue}{Pri}mitive \textcolor{myblue}{G}uidance to Diffusion and Flow Policies for Adaptive R\textcolor{myblue}{o}botic Manipulation}
\author{Zezeng Li$^{1}$, ~ Enda Xiang$^{2}$, ~Thuy Tran$^{1}$, ~Di Huang$^{2}$, ~Momath Thiam$^{1}$, ~Liming Chen$^1$\\ 	
\normalsize{$^1$École Centrale de Lyon, France}\qquad
\normalsize{$^2$Beihang University, China}
}

\begin{document}

\maketitle

\input{sections/0_abstract}    
\input{sections/1_introduction}

\input{sections/2_related_works}

\input{sections/3_method}
\input{sections/4_experiments}
\input{sections/5_conclusion}

\clearpage

\acknowledgments{This work was in part supported by the French Research Agency ANR, l’Agence Nationale de Recherche, through the projects Chiron (ANR-20-IADJ-0001-01), Aristotle (ANR-21-FAI1-0009-01),  Astérix (ANR-23-EDIA-0002), and  DEMETER project (ANR-25-HTCE-0002), the French national investment priority program through the PSPC FAIR WASTE project, and the Franco-Chinese Research Center for Carbon Neutrality through the Artemis project.}

\bibliography{main}

\input{sections/A_appendix}

\end{document}

%% file: preamble.tex
\usepackage{amsthm}
\usepackage{tabularx}
\usepackage{multirow}
\usepackage{multicol}
\usepackage{colortbl}
\usepackage{bm} 
\usepackage{longtable}
\usepackage{xcolor}

\definecolor{myblue}{HTML}{5364cc}
\definecolor{ImportantColor}{RGB}{204, 255, 229}

\newcommand{\SE}{\mathrm{SE}}

\newcommand{\ac}{\mathbf{A}}
\newcommand{\ob}{\mathbf{O}}
\newcommand{\as}{\mathbf{a}}

%% file: sections/0_abstract.tex
\begin{abstract}
Imitation learning has enabled remarkable progress in robotic manipulation, especially with diffusion and flow-based policies that generate complex visuomotor behaviors directly from demonstrations. Yet, despite their strong performance, these policies often fail to generalize across tasks and environments. A key reason is that existing policies tend to imitate superficial action correlations rather than the underlying intent. Inspired by the compositional structure of human behaviors, we propose \textcolor{myblue}{\textbf{PriGo}}, a primitive-guided test-time adaptive framework for robust robotic manipulation. PriGo introduces PANet, a lightweight primitive prediction module that infers primitive distributions directly from observations. We further propose a differentiable primitive guidance mechanism that refines generated actions during inference, steering trajectories toward semantically consistent behaviors. Unlike prior primitive-conditioned approaches, PriGo operates entirely at test time and can be seamlessly integrated into pretrained diffusion and flow policies without retraining. Extensive experiments on LIBERO, CALVIN, SIMPLER, and real-world robotic tasks demonstrate that PriGo consistently improves robustness, long-horizon execution, and generalization ability across both diffusion and flow-based policies.
Codes are available on \href{https://github.com/OTCVCG/PriGo}{\textcolor{myblue}{\textbf{PriGo}}}.
\end{abstract}



%% file: sections/1_introduction.tex
\section{Introduction}
Imitation learning (IL) has become a dominant paradigm for robotic manipulation, enabling policies to acquire complex visuomotor behaviors directly from expert demonstrations without manually designed reward functions. Recent diffusion and flow-based policies have further improved action generation quality by modeling multi-modal action distributions and leveraging large-scale vision-language representations. Despite these advances, existing policies often remain brittle under distribution shifts, including variations in object, lighting, distractors, and unseen task~\citep{sapkota2025vision,li2025robotic}.

A key limitation is that current policies primarily learn correlations between observations and low-level actions, rather than the underlying structural intent of manipulation behaviors. Consequently, generated actions may locally resemble expert demonstrations while failing to preserve the intended motion structure required for successful task completion. As shown in Fig.~\ref{fig:teaser}.a, when opening a door, the policy incorrectly pulls before rotating the key, or generates unstable grasping motions under environmental perturbations. Such failures become particularly severe in long-horizon manipulation, where small local errors accumulate across multiple subtasks. To improve robustness and generalization, an important direction is to incorporate structured action priors into policy generation. Human manipulation behaviors naturally exhibit compositional structure, where complex tasks are composed of a small number of reusable motion primitives such as grasping, pushing, pulling, and rotation. Rather than treating actions as unconstrained continuous trajectories, these primitives provide a compact structural prior that constrains policies toward semantically consistent behaviors.

Recent works~\cite{hiranaka2023primitive,gao2024prime} have sought to disentangle primitives into discrete types with associated parameters, capturing both categorical abstractions and low-level action prediction. However, they limit the primitive set to basic actions, which restricts their applicability to more complex tasks. Furthermore,

\begin{figure}
\vspace{-1.5mm}
    \begin{minipage}{0.45\linewidth}
     these methods treat primitives as additional input rather than enforcing them as mandatory constraints, resulting in generated actions that do not adhere to the required primitives.
     
     To this end, we introduce PriGo, a primitive-guided test-time adaptation framework for robotic manipulation policies. Instead of retraining existing policies, PriGo introduces structured primitive guidance during inference in a plug-and-play manner. Specifically, we first train a lightweight primitive classifier, PANet, which predicts primitive distributions directly from visual observations and language instructions. We adopt a compact 
    \end{minipage}
    \hfill
    \vspace{-1.5mm}
    \begin{minipage}{0.53\linewidth}
    \centering
    \includegraphics[width=1.0\linewidth]{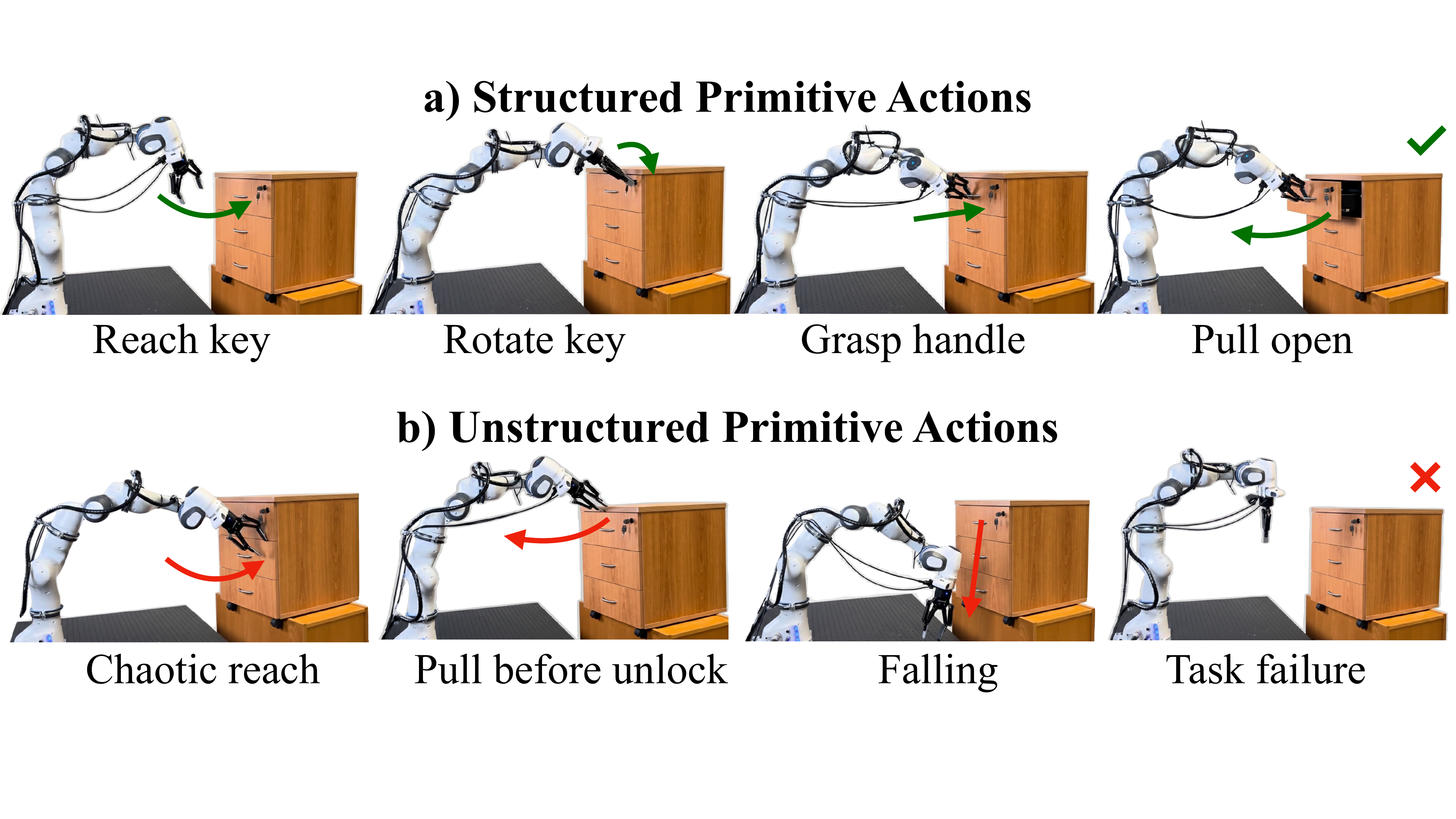}
    \vspace{-3mm}
    \caption{Comparison of a) our policy with PriGo, which produces structured primitive actions, and b) vanilla policies without PriGo, which lack structured primitive action outputs.}
    \label{fig:teaser}
    \end{minipage}
\end{figure}

\noindent primitive taxonomy consisting of eight manipulation primitives, providing a balance between structural expressiveness and cross-task generalization.

We then incorporate differentiable primitive guidance into both diffusion and flow policies. During inference, actions are refined using gradients derived from primitive consistency objectives, encouraging trajectories to align with the predicted manipulation structure. Unlike prior guided diffusion approaches that mainly rely on task rewards or language supervision, PriGo introduces structured action-level guidance tailored for robotic manipulation. Our contributions are summarized as follows:
\begin{itemize}[itemsep=0pt, leftmargin=12pt]
\vspace{-2mm}
\item We propose PriGo, a plug-and-play primitive-guided test-time adaptation framework for robotic manipulation policies.
\item We introduce a differentiable primitive guidance mechanism that constrains diffusion and flow policy generation toward structured manipulation behaviors without retraining.
\item We design a lightweight primitive classification framework with stable cross-task primitive representations and probabilistic multi-primitive guidance.
\item We conduct comprehensive experiments and provide thorough analysis, demonstrating the superiority of \textbf{PriGo} in both simulation and real robot tasks, outperforming state-of-the-art methods.
\end{itemize}

%% file: sections/2_related_works.tex
\section{Related Works}
\vspace{-2mm}
\paragraph{Diffusion and Flow Policies for Manipulation.} 
Diffusion policies~(DPs)~\cite{chi2023diffusion,black2023zero,xian2023chaineddiffuser,wangadamanip,ze20243d,yangequibot,3d_diffuser_actor,wang2024equivariant,he2025learning,xiang2026graspldp,team2024octo,wen2024diffusion,li2024cogact,zhou2025chatvla} have recently emerged as an effective framework for visuomotor policy learning by modeling action generation as iterative denoising processes. Existing approaches can be broadly divided into two categories. The first is \textbf{lightweight DPs} ~\cite{chi2023diffusion,black2023zero,xian2023chaineddiffuser,wangadamanip,ze20243d,yangequibot,3d_diffuser_actor,wang2024equivariant,he2025learning,xiang2026graspldp}, which have evolved from the original 2D diffusion policy to 3D variants and, more recently, to equivariant formulations that better exploit symmetries. The second category comprises \textbf{generalist DPs}~\cite{team2024octo,wen2024diffusion,li2024cogact,zhou2025chatvla,liu2024rdt,liu2025hybridvla}, which combine diffusion with large-scale vision–language models~(VLM) to enable general-purpose robotic behaviors across diverse tasks.

Parallel to diffusion-based approaches, flow matching has also been adopted for robotic policy learning due to its stable optimization and efficient inference. Existing approaches can be broadly categorized into two groups: (1) lightweight flow-based policies, such as FlowMS~\cite{rouxel2024flow}, FMP~\cite{zhang2024affordance}, and ActionFlow~\cite{funk2024actionflow}, which focus on stable trajectory generation and spatial reasoning; and (2) VLM-integrated flow policies and VLA systems, including $\pi_0$~\cite{black2410pi0}, HiRobot~\cite{shi2025hi}, GraspVLA~\cite{deng2025graspvla}, and SmolVLA~\cite{shukor2025smolvla}, which combine flow matching with large-scale vision-language representations to improve generalization, sim-to-real transfer, and deployment efficiency.

Despite their success, existing diffusion and flow policies remain sensitive to distribution shifts at test time. Several recent works attempt to address this issue through guided sampling or test-time adaptation. MetaDiffuser~\cite{ni2023metadiffuser} employs classifier guidance, using pretrained models to steer the diffusion process, while LDuS~\cite{kim2024llm} leverages large language models (LLMs) to bias sampling toward goal-consistent actions. More recently, ADPro~\cite{li2025adpro} proposes task-aware initialization and constrains updates along both task and spherical manifolds, yielding a more generalizable policy. However, these approaches primarily focus on task-level objectives or reward guidance, and do not explicitly enforce structured manipulation behaviors during action generation.

\begin{figure}[htbp!]
\centering
\includegraphics[width=1.0\textwidth]{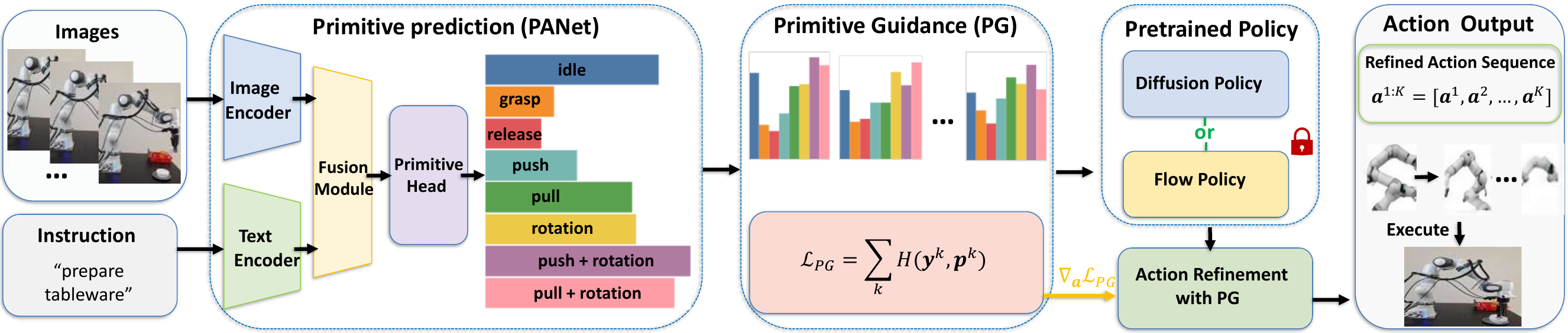}
\caption{Overview of \textcolor{myblue}{\textbf{PriGo}}, a primitive-guided framework for robotic manipulation policies. Given visual observations and language instructions, the primitive classifier PANet predicts probabilistic manipulation primitives. During inference, generated actions from pretrained diffusion or flow policies are converted into soft primitive distributions and aligned with the predicted primitives through differentiable primitive guidance. The resulting gradient-based refinement steers action toward semantically consistent trajectories without retraining the backbone policy.} 
\label{fig:framework}
\end{figure}

\paragraph{Primitive-based Policies.} Primitive-based control has long been studied for improving policy compositionality and generalization. Early approaches \cite{peng2019mcp,merel2019neural,knaust2021guided,dalal2021accelerating} employed compositional networks to combine primitives into complex behaviors, where the composite policy's action distribution was a weighted sum over a set of \textbf{latent feature primitives}. However, their expressiveness was constrained by the simplicity of Gaussian primitives. Other researchers leverage LLMs to decompose task instructions into \textbf{prompt primitives}, using chain-of-thought reasoning to guide policy execution \cite{zhang2023bootstrap,garcia2025towards,pan2025omnimanip}. While effective in some cases, they rely heavily on the reliability of LLM-based decomposition. 

Several recent works further explore \textbf{extensible network primitives} that expand through incrementally added policy modules, often implemented with lightweight adapters such as LoRA \cite{sun2025learning,liu2025skill,zhou2025chatvla}, enabling continual learning across diverse tasks \cite{wan2024lotus,guo2025srsa}. Nevertheless, it raises challenges in storage efficiency and in managing experts' allocation as the library scales. More recent works \cite{hiranaka2023primitive,gao2024prime} sought to disentangle primitives into discrete types with associated parameters. Yet, they typically restrict the primitive set to only a handful of basic actions, limiting applicability to more complex tasks. 

Different from prior works, our work focuses on test-time primitive guidance rather than primitive-conditioned policy learning. PriGo does not require retraining the policy and instead introduces differentiable primitive constraints during inference. Furthermore, our formulation naturally supports probabilistic multi-primitive guidance, enabling smooth transitions between manipulation primitives.

%% file: sections/3_method.tex
\vspace{-2mm}
\section{Method}
\vspace{-3mm}
As illustrated in Fig.~\ref{fig:framework}, PriGo introduces primitive-guided test-time adaptation for robotic manipulation policies. Our framework consists of two components: a primitive classifier PANet that predicts primitive distributions from observations, and a differentiable primitive guidance mechanism that constrains action generation during inference. PriGo can be seamlessly integrated into pretrained diffusion and flow policies without retraining.

\textbf{Problem Statement.}
Given a set of expert demonstrations containing action-observation pairs, denoted as $\mathcal{D} = \{(\ob^k,\ac^k )\}_{k=1}^N$, our objective is to fit a manipulation policy that maps the observation $\ob$ to the action $\ac$. Here, $\ob$ consists of RGB images and a language instruction, and action $\as:= ( \mathbf{x}, \mathbf{r}, w)$ is a 7D pose, defined by translation $\mathbf{x}=(x,y,z)$, rotation $\mathbf{r}=(\text{roll},\text{pitch},\text{yaw})$, and the gripper width $w\in[0,1]$. 
To avoid ambiguity, we use the superscript $k$ of $\as^k$ to denote the frame index, while the subscript $t$ of noisy action $\as_t$ denotes the time steps during the denoising process.

\subsection{Primitive Representation and Classification}

Instead of directly constraining continuous actions, PriGo introduces a compact set of manipulation primitives as structured action priors $\mathcal{Y}=\{$``idle'', ``grasp'', ``release'', ``push'', ``pull'', ``rotation'', ``push+rotation'', ``pull+rotation''$\}$. 
We adopt this compact primitive taxonomy to balance structural expressiveness and cross-task generalization. Empirically, overly fine-grained primitive decompositions introduce long-tail imbalance and semantic ambiguity, while the proposed primitive set yields stable primitive distributions across diverse tasks.

\begin{definition}
\label{def:HardClassification}
\textbf{(Primitive Action Hard Classification)}.
Given the thresholds $\tau_{\text{trans}}, \tau_{\text{rot}}, \tau_w > 0$, the hard classification $\Phi:\as^k\to y$ for an action $\as:= ( \mathbf{x}, \mathbf{r}, w)$ is formulated from the translational magnitudes $\|\mathbf{x}^k\|$, the rotational magnitudes $\|\mathbf{r}^k\|$ and the gripper width $w$ as:
\begin{equation}\label{eq:HardClassify}
\Phi(\as^k) =
\begin{cases}
0 & \text{if } \|\mathbf{x}^k\| < \tau_{\text{trans}}, \;\|\mathbf{r}^k\| < \tau_{\text{rot}}, |w^k - w^{k-1}| < \tau_w \quad (\text{idle})\\
1 & \text{if } w^{k-1} - w^k \geq \tau_w \quad (\text{grasp}) \\
2 & \text{if } w^{k} - w^{k-1} \geq \tau_w \quad (\text{release}) \\
3 & \text{if } \|\mathbf{x}^k\| \geq \tau_{\text{trans}}, \;\|\mathbf{r}^k\| < \tau_{\text{rot}}, |w^k-w^{k-1}| < \tau_w, \mathbf{x}^k_1 > 0 \quad (\text{push}) \\
4 & \text{if } \|\mathbf{x}^k\| \geq \tau_{\text{trans}}, \;\|\mathbf{r}^k\| < \tau_{\text{rot}}, |w^k-w^{k-1}| < \tau_w, \mathbf{x}^k_1 < 0 \quad (\text{pull}) \\
5 & \text{if } \|\mathbf{x}^k\| < \tau_{\text{trans}}, \;\|\mathbf{r}^k\| \geq \tau_{\text{rot}}, |w^k-w^{k-1}| < \tau_w,  \quad (\text{rotation})\\
6 & \text{if } \|\mathbf{x}^k\| \geq \tau_{\text{trans}}, \;\|\mathbf{r}^k\| \geq \tau_{\text{rot}}, \mathbf{x}^k_1 > 0, |w^k-w^{k-1}| < \tau_w \quad (\text{push+rotation})\\
7 & \text{if } \|\mathbf{x}^k\| \geq \tau_{\text{trans}}, \;\|\mathbf{r}^k\| \geq \tau_{\text{rot}}, \mathbf{x}^k_1 < 0, |w^k-w^{k-1}| < \tau_w \quad (\text{pull+rotation})
\end{cases}
\end{equation}
\end{definition}

Definition~\ref{def:HardClassification} is formulated in a robot-centric action frame to ensure simplicity and stable cross-task consistency, while being complete for 7-DOF manipulation control. Through primitive compositions, this representation supports a wide range of robotic manipulation tasks. Eq.~\eqref{eq:HardClassify} enables automatic generation of primitive labels, thus allowing us to learn a primitive classifier without manual annotation. We then train a classifier PANet: $\hat{\mathbf{y}}^k=\text{PANet}(\ob^k)$, which predicts primitive distributions from observations. PANet adopts DINOv2~\cite{oquab2024dinov2} as the visual encoder and T5~\cite{2020t5} as the text encoder, followed by a multimodal fusion module and MLP classification head.

Importantly, PANet predicts probabilistic primitive distributions rather than hard labels. This probabilistic formulation improves robustness under ambiguous transitions between primitives and naturally supports multi-primitive guidance during long-horizon manipulation.


\subsection{Primitive Guidance}\label{sec:guidance}
To enable differentiable guidance while preserving local consistency with the corresponding hard primitive assignments, we define a soft primitive distribution induced by the generated actions.
\begin{definition}
\label{def:softClassification}
\textbf{(Soft Classification)}. Given the differentiable score functions:
$
\mathbf{f}^k := \big(
-(\|\mathbf{x}^k\| + \|\mathbf{r}^k\| + |w^k - w^{k-1}|), \;\;
w^{k-1} - w^k, \;\;
w^k - w^{k-1}, \;\;
\mathbf{x}^k_1, \;\;
-\mathbf{x}^k_1, \;\;
(\|\mathbf{r}^k\| - \|\mathbf{x}^k\|), \;\;
\mathbf{x}^k_1 + \|\mathbf{r}^k\|, \;\;
-\mathbf{x}^k_1 + \|\mathbf{r}^k\|
\big).
$
The soft classification, i.e., the probability distribution over primitive categories, is obtained using a temperature-scaled softmax, i.e., $\mathbf{p}^k = \text{softmax}\!\left(\mathbf{f}(\mathbf{a}^k)/\tau\right)$,
where $\tau > 0$ is a temperature.
\end{definition}

Given predicted primitive distributions $\hat{\mathbf{y}}^k$ and generated actions $\mathbf{a}^k$, the primitive guidance is:
\begin{equation}
\mathcal{L}_{PG}(\as^k,\ob^k)
=
-\frac{1}{B}
\sum_{k=1}^{B}
H(\hat{\mathbf{y}}^k,\mathbf{p}^k)= -\frac{1}{B} \sum_{i=1}^B H(\text{PANet}(\ob^k),\mathbf{p}^k)\,,
\end{equation}
where $H(\cdot,\cdot)$ denotes cross-entropy between predicted primitive distributions and action-induced primitive distributions, $B$ is the batch size. Unlike hard primitive constraints or single-label guidance, this probabilistic formulation enables smooth transitions between primitives and improves robustness during long-horizon execution. In particular, multiple primitives may simultaneously receive non-negligible probabilities during transitional stages, such as switching from \texttt{push} to \texttt{rotation}. This soft guidance reduces abrupt action corrections and improves trajectory consistency.

\subsection{Diffusion Policy with Primitives Guidance}\label{sec:dp_guidance}
Diffusion policy formulates the manipulation process as a Markov process and leverages DDPM~\cite{ho2020denoising} to model the manipulation policy with multimodal observation $\ob^k$. Given observation $\ob^k$, it performs a sequence of $T$ denoising steps starting from a random action $\as_{T}^k\sim \mathcal{N}(\mathbf{0}, \mathbf{I})$ to generate an action $\as_0^k$ with iteration formula $\as_{t-1}^k=\alpha_t(\as_t^k - \gamma_t \varepsilon_\theta(\ob^k, \as_t^k, t) + \epsilon)$,
where $\epsilon\sim \mathcal{N}(\mathbf{0}, \sigma_t^2\mathbf{I})$. The coefficients $\alpha_t, \gamma_t, \sigma_t$ are pre-defined noise schedules of the denoising step $t$. 

To improve the transferability of diffusion policies, we incorporate the primitive guidance mechanism introduced in \Cref{sec:guidance} into the denoising process. By leveraging test-time observations to refine generated actions, the proposed approach steers denoising updates toward primitive-consistent trajectories, resulting in more robust and semantically coherent action generation under distribution shifts. This yields the following guided diffusion policy:
\begin{equation}\label{eq:primdf}
\begin{split}
&\Tilde{\as}^k_{t-1} = \alpha_t(\as_t^k - \gamma_t \varepsilon_\theta(\ob^k, \as_t^k, t) + \epsilon) \,, \\
&\hat{\as}^k_{0} =(\Tilde{\as}^k_{t-1} -\sqrt{1-\bar{\alpha}_t}\varepsilon_\theta(\ob^k, \as_t^k, t))/\sqrt{\bar{\alpha}_t}\,,\\
&\as^k_{t-1} =\Tilde{\as}^k_{t-1} - \eta \nabla_{\as} \mathcal{L}_{\text{PG}}(\hat{\as}_0^k, \ob^k) \,,
\end{split}
\end{equation}
Here, $\hat{\as}^k_{0}$ denotes the estimated noise-free action obtained through the pre-defined inverse diffusion schedule. By incorporating the primitive guidance term $\nabla_{\as}\mathcal{L}_{\text{PG}}(\cdot,\cdot)$, Eq.~\eqref{eq:primdf} constrains the update direction of the inverse diffusion process, ensuring consistency with the intended primitive actions.

\subsection{Flow Policy with Primitives Guidance}\label{sec:fp_guidance}
Flow policies leverage flow matching to learn a continuous transformation from noisy to expert actions. Given an action $\as$, the flow path was defined as $\as_t = \alpha_t\as + b_t \mathbf{z}, t \in [0,1]$, where $\alpha_t$ and $b_t$ are pre-defined schedules, $\mathbf{z} \sim p(\mathbf{z})$ is an initial noise action. A commonly used schedule is $\alpha_t=t$ and $b_t=1-t$, then the velocity field is $\mathbf{v}(\as_t, t) = \frac{\mathrm{d} \as_t}{\mathrm{d}t}=\as-\mathbf{z}$. The flow policy, parameterized by $\theta$, predicts velocity field $\mathbf{v}_\theta(\as_t,\ob, t)$. After training, it generate $\as_1$ from noise $\as_0=\mathbf{z}$ by using forward Euler rule $\as_{t+\delta} = \as_t + \delta\mathbf{v}_\theta(\as_t,\ob,t)$, where $\delta$ is the integration step size.

Similarly, we integrate test-time primitive guidance in \Cref{sec:guidance} into the inference phase of flow policies.
Specifically, we leverage the primitives derived from test-time observation to constrain the continuous actions generated, thereby yielding actions that are consistent with the target primitives. From the pre-defined flow schedule, we can deduce the following guided flow policy:
\begin{equation}\label{eq:primfp}
\begin{split}
    &\tilde{\as}^k_{t+\delta} = \as^k_t + \delta\mathbf{v}_\theta(\as^k_t,\ob^k,t),\\
    &\hat{\as}^k_1 = \as^k_t+(1-t)\mathbf{v}_\theta(\as^k_t,\ob^k,t),\\
    &\as^k_{t+\delta} = \tilde{\as}^k_{t+\delta}-\eta\nabla_{\as}\mathcal{L}_\text{PG}(\hat{\as}^k_1,\ob^k).
\end{split}
\end{equation}
This is equivalent to defining a guided velocity field $\mathbf{v}_{\theta}^{\text{guided}} =
\mathbf{v}_{\theta} - \frac{\eta}{\delta} \nabla_{\as} \mathcal{L}_{\text{PG}}(\hat{\as}_1, \ob)$.

Our method imposes stronger constraints during the action refinement process for both diffusion-based and flow-based policies, thereby generating actions that conform to the desired structured primitives. Compared to the original policy models, the guided policies produce actions whose deviation from the target actions is bounded, a property that we further elaborate on in the supplementary material.

%% file: sections/4_experiments.tex
\section{Experiments}\label{sec:experiments}
\vspace{-2mm}
We design experiments to answer the following questions: \textbf{i)} How accurate and robust is PANet for primitive prediction? \textbf{ii)} Can the proposed primitive guidance improve the manipulation performance of pretrained policies without retraining? \textbf{iii)} Does PriGo generalize effectively to real-world robotic manipulation tasks? \textbf{iv)} How does each core component contribute to the overall performance? 
We first evaluate the accuracy and robustness of PANet in \Cref{sec:accrob}. Next, we demonstrate the performance improvements of PriGo on simulated benchmarks in \Cref{sec:simulated} and real-world robotic tasks in \Cref{sec:Real-Robot}. Finally, we conduct ablation studies in \Cref{sec:ablation}.

\begin{figure}[htbp]
  \centering
  \begin{minipage}[c]{0.52\textwidth}
    \centering
    \scriptsize 
    \resizebox{\linewidth}{!}{
        \renewcommand{\arraystretch}{1.1} 
        \setlength{\tabcolsep}{0.6mm}     
        \begin{tabular}{@{}lcccccccc@{}}
          \toprule
          Type & Idle & Grasp & Rel. & Push & Pull & Rot. & Push+Rot. & Pull+Rot. \\ \midrule
          Idle  & \textbf{90} & 6  & 2   & 1   & 1  & 0  & 0  & 0 \\
          Grasp & 7 & \textbf{96} & 1  & 0  & 1  & 0  & 0  & 0  \\
          Rel. & 1 & 2 & \textbf{96} & 1 & 0 & 0 & 0 & 0 \\
          Push & 0  & 0  & 0 & \textbf{94} & 2 & 1 & 3 & 0 \\
          Pull & 0  & 0  & 0 & 1 & \textbf{95} & 1 & 0 & 3 \\
          Rot. & 0  & 0  & 0 & 1 & 2 & \textbf{93} & 2 & 2 \\
          Push+Rot. & 0 & 0 & 0& 5  & 0& 4  & \textbf{91} & 0 \\
          Pull+Rot.& 0 & 0  & 0  & 0 & 6  & 4  & 0 & \textbf{90} \\ \bottomrule
        \end{tabular}
    }
  \end{minipage}%
  \hfill 
  \begin{minipage}[c]{0.45\textwidth}
    \centering
    \includegraphics[width=\linewidth]{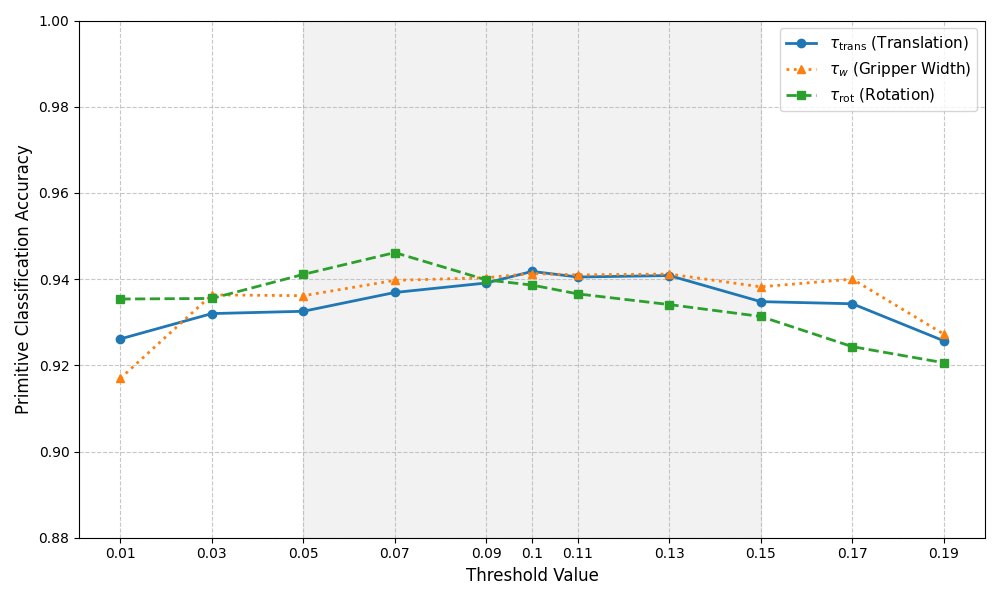}
  \end{minipage}
  \vspace{-3mm}
  \caption{Confusion matrix of PANet primitive classification (left) and sensitivity analysis of auto-labeling thresholds (right). Rel. and Rot. denote release and rotation, respectively.}
  \label{fig:confusion_thresholds}
\end{figure}

\subsection{Analysis of PANet Robustness}\label{sec:accrob}
To evaluate the reliability of PANet, we report its primitive classification accuracy and confusion matrix in the left subfigure of Fig.~\ref{fig:confusion_thresholds}. PANet achieves 94.0\% accuracy on LIBERO. Most errors occur between semantically adjacent primitives, such as \texttt{push+rotation} and \texttt{push} or \texttt{rotation}, indicating that misclassifications largely remain within similar local action manifolds.

We further analyze the sensitivity of PriGo to the thresholds $\tau_{\text{trans}}$, $\tau_{\text{rot}}$, and $\tau_w$. As shown in the right subfigure of Fig.~\ref{fig:confusion_thresholds}, performance remains stable within a broad range of threshold values, demonstrating that PriGo is not overly sensitive to specific hyperparameter value choices.

\begin{table}[htb!]
\vspace{-3mm}
\small
\caption{Success rate on LIBERO with four settings (spatial, object, goal, long-horizon). Comparison between with~(w/ PG) and without~(w/o PG) primitive guidance PriGo. Here $\pi_0$+PG and SmolVLA+PG are PriGo-FP. DP+PG and CogACT+PG are PriGo-DP.}
\renewcommand{\arraystretch}{0.8}
\renewcommand{\tabcolsep}{0.88mm}
\centering
\begin{tabular}{l|cc|cc|cc|cc|cc}
\toprule
\multirow{2}{*}{Methods}& \multicolumn{2}{c|}{Spatial}  &  \multicolumn{2}{c|}{Object}&   \multicolumn{2}{c|}{Goal} &	\multicolumn{2}{c|}{Long} & \multicolumn{2}{c}{\cellcolor{ImportantColor}Avg.}\\
&w/o PG & w/ PG & w/o PG & w/ PG &w/o PG & w/ PG &w/o PG & w/ PG & \cellcolor{ImportantColor}w/o PG & \cellcolor{ImportantColor}w/ PG \\
\midrule
$\pi_0$~\cite{black2410pi0} & 89.2 & 91.0 & 84.6 & 87.8 & 94.4 & 95.2 & 75.2 & 78.8 &\cellcolor{ImportantColor} 85.6 &\cellcolor{ImportantColor} 88.2 \\
SmolVLA~\cite{shukor2025smolvla} & 92.0 & 93.6 & 93.4 & 94.4 & 91.4 & 93.2 & 75.8 & 79.8 &\cellcolor{ImportantColor} 88.2   &\cellcolor{ImportantColor} 90.3\\
\midrule
DP~\cite{chi2023diffusion} & 76.2 & 80.6 & 87.4 &  90.8 & 70.4 & 72.6 & 54.2 & 58.4 &\cellcolor{ImportantColor}72.1 &\cellcolor{ImportantColor}75.6 \\
CogACT~\cite{li2024cogact} & 91.1 & 95.6 & 92.0 & 97.3 & 91.5 & 95.8 & 76.4 & 80.4   &\cellcolor{ImportantColor}87.8  &\cellcolor{ImportantColor}92.3 \\
\bottomrule
\end{tabular}
\label{tab:LIBERO}
\end{table}

\subsection{Simulated Evaluation}\label{sec:simulated}
\vspace{-2mm}
\textbf{Evaluation on LIBERO.} LIBERO~\cite{liu2023libero} is a large-scale benchmark encompassing over 130 tasks and categorized into five distinct suites. To verify the effectiveness of the proposed primitive guidance, we integrate it into a suite of pretrained policies, including the flow policies $\pi_0$~\cite{black2410pi0}, SmolVLA~\cite{shukor2025smolvla}, diffusion policies DP~\cite{chi2023diffusion}, and CogACT~\cite{li2024cogact}. This integration yields two policy variants, namely PriGo-DP and PriGo-FP. For $\pi_0$ and SmolVLA, we adopt the pre-trained models and hyperparameter configurations provided by LeRobot\footnote{https://github.com/huggingface/lerobot}. As for DP and CogAct, no official pre-trained checkpoints on LIBERO are available, we thus fine-tuned these two models on LIBERO. 

\Cref{tab:LIBERO} presents the evaluation results of four pretrained policies, with and without the integration of our primitive guidance module. As shown, the proposed PriGo consistently improves performance by 3–5 points across flow-based and diffusion-based policies, regardless of whether the backbone is a large-scale model (e.g., $\pi_0$ and CogAct) or a lightweight one (DP). This consistent improvement stems from PriGo’s direct intervention: by enforcing generated actions to conform to the primitives required by the task, PriGo produces more structured actions, which leads to higher success rates.

\textbf{Evaluation on SIMPLER.} SIMPLER benchmark~\cite{li2025evaluating} provides two evaluation settings: \textit{Visual Matching}, which closely mirrors real-world scenarios by minimizing the gap between simulated and physical environments, and \textit{Variant Aggregations}, which introduces variations in background, lighting, distractors, and table textures. We compare our PriGo-DP with five other VLA models, with the corresponding results summarized in \Cref{tab:google_robot}. Notably, all baseline results presented in this table are from CogACT. 
As shown, PriGo-DP outperforms all existing baseline methods by a clear margin. Specifically, relative to CogACT, our approach delivers an average improvement of 5 percent under the Visual Matching setting and 7 points under the Variant Aggregation setting. These results demonstrate that PriGo can effectively enhance the test-time performance of the VLA model CogACT in a plug-and-play manner, without the need for additional retraining.

\begin{table*}[!htbp]
\centering
\vspace{-4mm}
\small
\renewcommand{\arraystretch}{0.8}
\renewcommand{\tabcolsep}{0.99mm}
\caption{Comparison of our approach PriGo-DP~(CogACT with PriGo) with existing VLA models on the Google robot across four tasks in two SIMPLER settings. }
\vspace{-2mm}
\begin{tabular}{l|l|cccc|c}
\multirow{2}{*}{Google Robot} &
  \multirow{2}{*}{Method} &
  Pick &
  Move &
  Open/Close & Open Top Drawer & \multirow{2}{*}{\emph{Average}} \\ 
  & & Coke Can & Near & Drawer & and Place Apple & \\
  \midrule
\multirow{6}{*}{\begin{tabular}[l]{@{}l@{}}SIMPLER \\ (Visual Matching) \end{tabular}}
             & RT-2-X~\cite{vuong2023open}   & 78.7   &  77.9  & 25.0  &  ~~3.7 & \cellcolor{ImportantColor}46.3  \\ 
             & Octo-Base~\cite{team2024octo}      & 17.0     & ~~4.2    & 22.7     &  ~~0.0    & \cellcolor{ImportantColor}11.0    \\ 
             & OpenVLA~\cite{kim2025openvla}      & 18.0     & 56.3     & 63.0     & ~~0.0     &\cellcolor{ImportantColor} 34.3   \\
             & CogACT~\cite{li2024cogact} & 91.3     & 85.0     & 71.8    &  50.9   & \cellcolor{ImportantColor}74.8  \\
             & \textbf{PriGo-DP} & \textbf{93.2}     & \textbf{89.3}     & \textbf{77.5}    &  \textbf{59.7}   &\cellcolor{ImportantColor} \textbf{79.9}  \\
             \midrule
\multirow{6}{*}{\begin{tabular}[l]{@{}l@{}}SIMPLER \\ (Variant Aggregation) \end{tabular}}
             & RT-2-X~\cite{vuong2023open}  & 82.3  &  79.2 & \underline{35.3} & 20.6 &\cellcolor{ImportantColor} 54.4 \\ 
             & Octo-Base~\cite{team2024octo}     & ~~0.6     & ~~3.1    & ~~1.1     &  ~~0.0    & \cellcolor{ImportantColor}~~1.2    \\ 
             & OpenVLA~\cite{kim2025openvla}  & 60.8     & 67.7     & 28.8     &  ~~0.0   & \cellcolor{ImportantColor} 39.3    \\
             & CogACT~\cite{li2024cogact}  & 89.6     & 80.8     & 28.3     & 46.6     & \cellcolor{ImportantColor}61.3      \\  
             & \textbf{PriGo-DP}  & \textbf{92.5}     & \textbf{83.6}     & \textbf{44.1}     & \textbf{55.0}     & \cellcolor{ImportantColor}\textbf{68.8}      \\   
\bottomrule
\end{tabular}
\label{tab:google_robot}
\end{table*}

\textbf{Evaluation on CALVIN.}
CALVIN~\cite{mees2022calvin} is a benchmark centered on a Franka Panda robot interacting within richly parameterized tabletop environments. It has four environments that differ in desk textures and object placements. The benchmark defines 34 tasks with sequential execution of language-conditioned instructions, where each instruction comprises five steps. We integrate the proposed primitive guidance into the pretrained diffusion policy 3D Diffuser Actor (3DDA)~\cite{3d_diffuser_actor}, and compare its performance with that of several SoTA methods on the CALVIN, including DTP~\cite{hou2024diffusion}, DP3~\cite{ze20243d}, 3DDA, and ADPro~\cite{li2025adpro}. The results are reported in \Cref{tab:calvin}.

\begin{table}[!h]
\vspace{-1mm}
\begin{minipage}[th!]{0.45\textwidth}
As shown in \Cref{tab:calvin}, our PriGo-DP (3DDA with PriGo), outperforms other diffusion policies by a notable margin. Specifically, PriGo-DP increases the average length compared to the vanilla diffusion policy 3DDA by 0.8, enhancing its performance on zero-shot long-horizon tasks. These results demonstrate that PriGo can effectively boost test-time generalization, even for non-VLA models.
\end{minipage}
\hfill
\vspace{-1mm}
\begin{minipage}[th!]{0.53\textwidth}
\captionsetup{type=table}
\caption{Zero-shot (Train A, B, C $\rightarrow$Test D) long-horizon evaluation on CALVIN. Avg. Len denotes the average length of correctly predicted trajectories.}
\footnotesize
\renewcommand{\arraystretch}{0.6}
\renewcommand{\tabcolsep}{0.82mm}
\centering
\begin{tabular}{@{}l|ccccc|c}
\toprule
\multirow{2}{*}{Method}  & \multicolumn{5}{c}{Task completed in a row}& \multirow{2}{*}{Avg. Len $\uparrow$}  \\
& 1 & 2 & 3 & 4 & 5&\\
    \midrule
    DTP & 94.5 & 82.5 & 72.8 & 61.3 & 50.0 & \cellcolor{ImportantColor}3.61 \\
    DP3  & 53.9 & 44.7 & 38.0 & 34.3 & 29.0 & \cellcolor{ImportantColor}2.00  \\
    3DDA & 93.8 & 80.3 & 66.2 & 53.3 & 41.2 & \cellcolor{ImportantColor}3.35 \\
    ADPro & 94.7 & 83.0 & 73.6 & 61.4 & 51.1 & \cellcolor{ImportantColor}3.64 \\
    \textbf{PriGo-DP}  & 98.1 & 92.1  & 87.6 & 80.4 & 59.9 & \cellcolor{ImportantColor}4.18  \\
\bottomrule
\end{tabular}
\label{tab:calvin}
\end{minipage}
\end{table}
%


\begin{figure*}[!htbp] 
    \centering
    \begin{subfigure}{\textwidth}
        \centering
        \includegraphics[width=0.9\textwidth,height=0.10\textwidth]{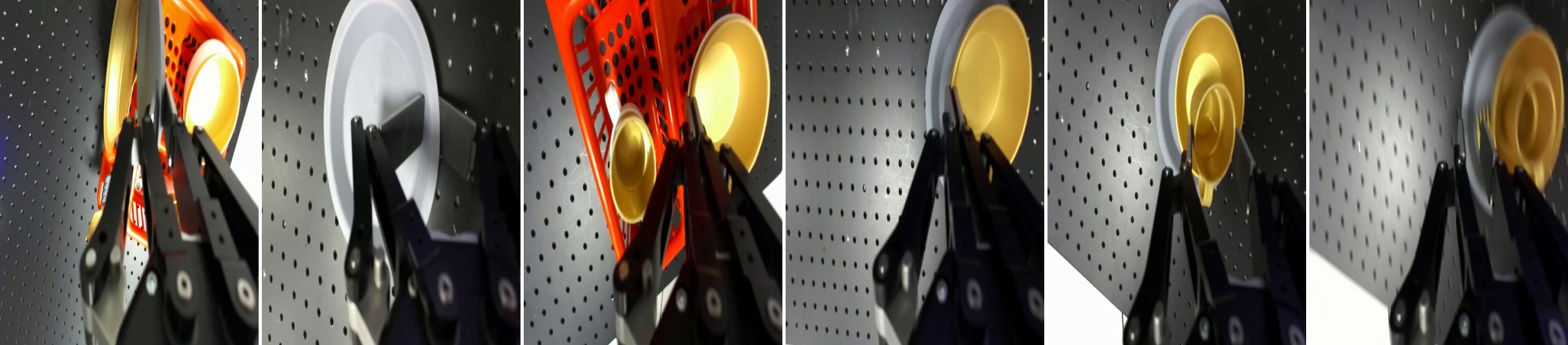}
        \caption{Rollout for the \textit{``Prepare a set of tableware} task''.}
    \end{subfigure}
    \begin{subfigure}{\textwidth}
        \centering
        \includegraphics[width=0.9\textwidth,height=0.10\textwidth]{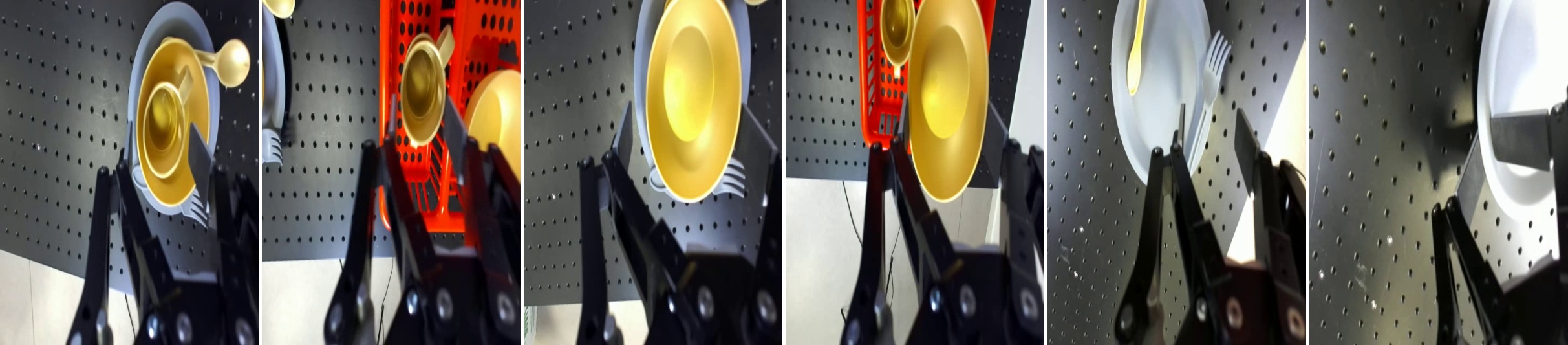}
        \caption{Rollout for the \textit{``Bring back a set of tableware to the basket''} task.}
    \end{subfigure}
    \begin{subfigure}{\textwidth}
        \centering
        \includegraphics[width=0.9\textwidth,height=0.10\textwidth]{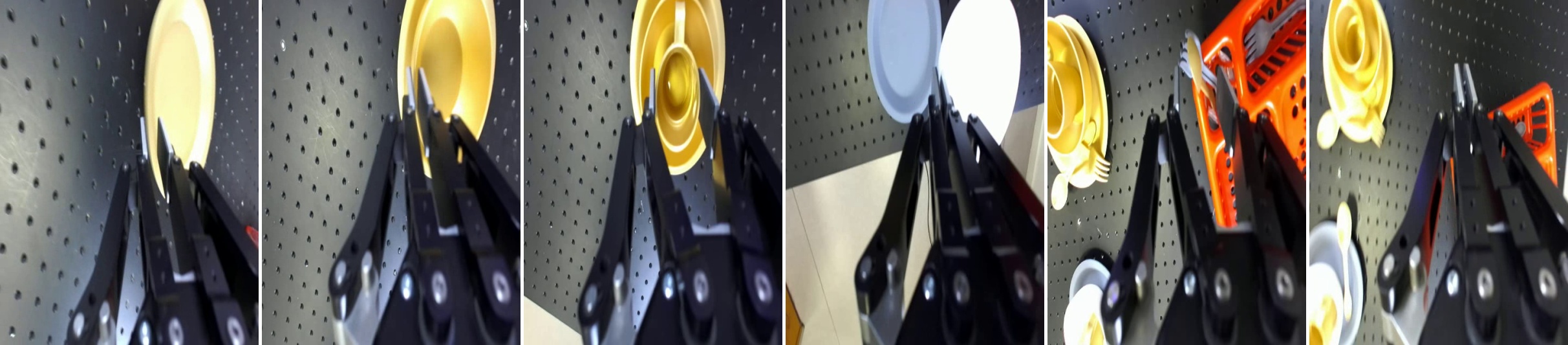}
        \caption{Rollout for the \textit{``Prepare two sets of tableware''} task.}
    \end{subfigure}
    \vspace{-5mm}
\caption{Rollouts of our PriGo-DP on three table-service tasks. PriGo-DP aligns the generated actions with task primitives, thereby improving policy generalization to novel tasks.}
\label{fig:visualization}
\end{figure*}

\subsection{Real-Robot Evaluation}\label{sec:Real-Robot}
\vspace{-3mm}
In our real-world setup, we conduct experiments with a Franka Emika Panda robot and with two RGB sensors. We design three sequential table service in restaurant-style scenarios, including: 1) \textit{prepare a set of tableware}, 2) \textit{bring back a set of tableware to the basket}, 3) \textit{prepare two sets of tableware}. The tableware used in the experiments included a disk, a bowl, a cup, a fork, and a spoon. The averaged success rates over 10 episodes per task are reported in \Cref{tab:realrobot}. Even with no more than 20 demonstration samples for training, our method achieves a success rate of 81\%. In comparison to the vanilla CogACT, our PriGo-DP demonstrates an average performance improvement of 26 absolute points across three tasks. This demonstrates that PriGo is highly beneficial for long sequences.

To illustrate the practical efficacy of our policy, we visualized representative test cases for each of the three tasks. The corresponding qualitative results are presented in \Cref{fig:visualization}, where each subplot depicts the key sequential steps of the task executed from initial state observation to final goal achievement. These visualizations demonstrate the robustness of our approach in real-world tasks.

\subsection{Ablation Studies}\label{sec:ablation}
\vspace{-2mm}
The proposed primitive guidance, PriGo, can operate in a plug-and-play manner—an attribute that has been fully validated by the experimental results presented in 
\Cref{tab:LIBERO,tab:google_robot,tab:calvin,tab:realrobot}. 

\noindent
\begin{minipage}[th!]{0.5\textwidth}
\captionsetup{type=table}
\renewcommand{\arraystretch}{0.8}
\renewcommand{\tabcolsep}{0.7mm}
\caption{Success rate of PriGo-DP (CogACT with PriGo) on three long-horizon real-world tasks. w/o PG and w/ PG mean with and without PriGo.}
\centering
\vspace{-1mm}
\begin{tabular}{@{}l|c|c|c@{}}
\toprule
Task & \#Train & w/o PG & w/ PG \\
\midrule
prepare one tableware& 12 & $47 \pm9.3$& $76\pm7.3$ \\
bring back tableware&  18 & $55 \pm11.3$& $81\pm9.6$\\
prepare two tableware& 20 & $40 \pm8.3$& $62\pm8.0$\\
\bottomrule
\end{tabular}
\label{tab:realrobot}
\end{minipage}
\hfill
\begin{minipage}[th!]{0.48\textwidth}
\captionsetup{type=table}
\renewcommand{\arraystretch}{0.8}
\caption{Ablation study. The policy learned with full 8 primitives outperforms its counterparts that do not distinguish ``push'', ``pull''~(PP), and do not use hybrid primitives~(HP).}
\vspace{-2mm}
\centering
\begin{tabular}{@{}l|c|c|c@{}}
\toprule
Dataset & w/o PP & w/o HP & FULL \\
\midrule
LIBERO & 89.9 & 91.0 & 92.3 \\
SIMPLER & 77.1 & 77.8 & 79.9 \\
\bottomrule
\end{tabular}
\label{tab:ablation}
\end{minipage}


The proposed primitive guidance can operate in a plug-and-play manner—an attribute that has been validated by the experimental results presented in 
\Cref{tab:LIBERO,tab:google_robot,tab:calvin,tab:realrobot}. To further dissect the efficacy of individual components of primitives, we conduct ablation studies on three distinct schemes: 1) Non-discriminated translation primitives (i.e., treating ``push'' and ``pull'' as a unified translational primitive type); 2) None hybrid primitives (excluding ``push+rotation'' and ``pull+rotation''); 3) A full set of eight primitive actions. The experimental results are presented in \Cref{tab:ablation}, which demonstrate that these primitive actions contribute positively to the learning and optimization of policy models.

\begin{figure}[htbp!]
\vspace{-3mm}
\centering
\includegraphics[width=0.49\linewidth]{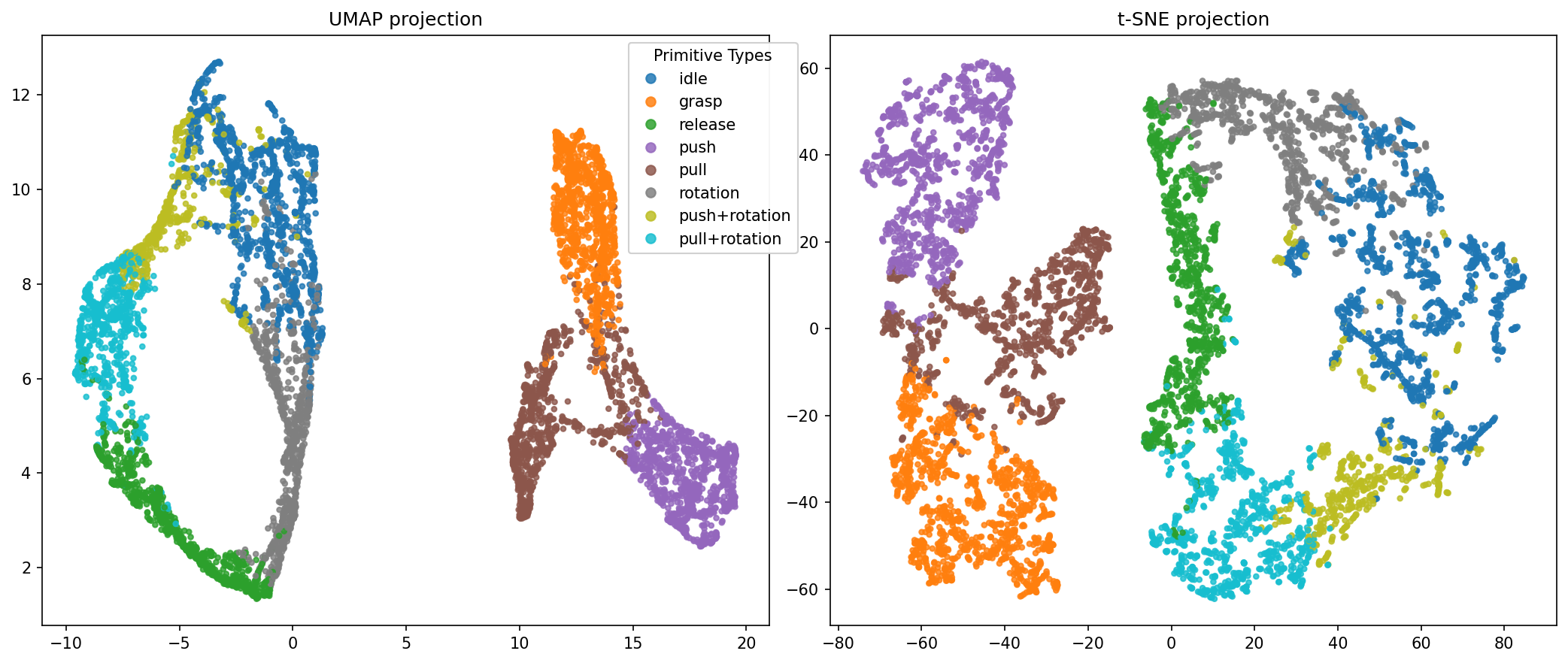}
\includegraphics[width=0.49\linewidth]{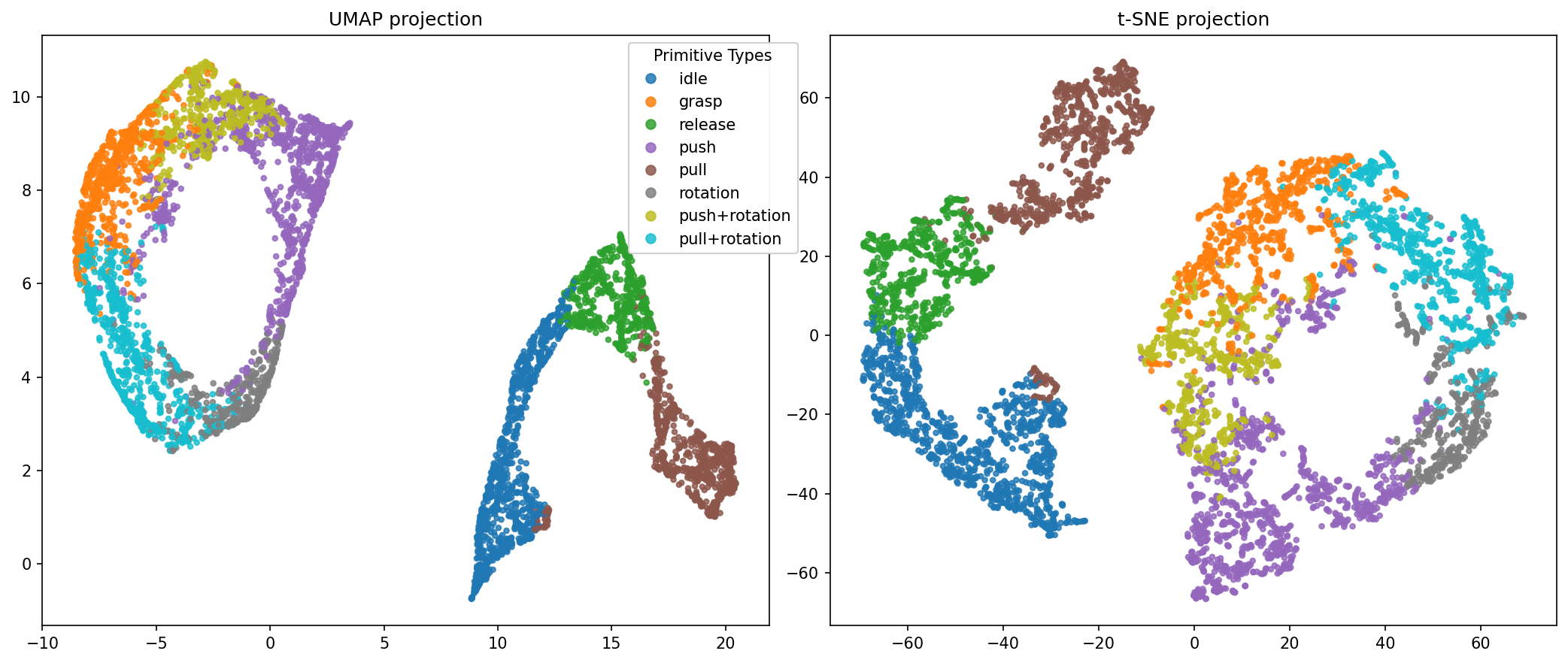}
\vspace{-2mm}
\caption{Visualization of actions and primitive labels from two episodes via UMAP and t-SNE.} 
\label{fig:tsne}
\end{figure}

As shown in \Cref{tab:ablation}, omitting the ``push'' and ``pull''~(PP) types results in the most significant performance degradation of the policy. Without differentiating between push and pull, the generated actions tend to deviate in direction, directly leading to task failure. The model also experiences a performance drop when hybrid primitives are removed. Notably, our observations show that integrating hybrid primitives leads to smoother action outputs from the policy.

Furthermore, to intuitively illustrate the primitive action classification results, we visualize the primitive action instances in \Cref{fig:tsne} using UMAP and t-SNE. The first two subfigures are from the Object subset of LIBERO with 7281 actions, and the third and fourth are from the Spatial subset with 5796 actions. Here, distinct colors denote different categories generated via Eq.~\eqref{eq:HardClassify}. As evidenced by the results, the same primitive exhibits clear clustering after dimensionality reduction.

%% file: sections/5_conclusion.tex
\section{Conclusion}\label{sec:conclusion}
\vspace{-2mm}
In this paper, we introduce PriGo, a primitive guidance approach that can work with both lightweight policies and large VLA models. PriGo effectively decouples discrete primitive planning from continuous action learning. To capture the primitive planning for a given task and its visual observations, we proposed a novel primitive classification module and an automatic primitive labeling method, enabling primitive prediction directly from current observations. To ensure that the generated actions align with the predicted primitives, we introduced a differentiable primitive guidance mechanism, which constrains action generation, leading to more robust manipulation. This guidance can be seamlessly integrated into pretrained diffusion and flow policies, offering a flexible, plug-and-play solution that enhances performance without the need for retraining. Experimental results demonstrate that our policy outperforms baseline methods in both simulation and real-world tasks.

\textbf{Limitations}. Although PriGo improves both flow-based and diffusion-based policies at test time, the gains remain limited. This mainly results from two factors. First, the plug-and-play design enables deployment without retraining but inherently restricts performance improvements. Future work could explore tighter integration with end-to-end training. Second, PANet’s generalization is still constrained by model size and training data scale. More suitable data and incorporating historical robot states may further improve the effectiveness of the guidance mechanism.

%% file: sections/A_appendix.tex
\newpage
\appendix
\onecolumn

This document presents the error-bound proposition, along with implementation details for PANet and a primitive-action analysis. It also provides additional experimental results, including the impact of different methods for deriving discrete primitive action types, and qualitative comparisons on real-world robotic tasks. 

\section{Error Bound} 
\begin{proposition}[Error Bound]
\label{prop:error_contraction}
Assume a pre-trained policy $\epsilon_\theta$ yields the action $\tilde{\as}_t$ at step $t$ with error $\tilde{\mathcal{E}}_t = \|\tilde{\as}_t - \as_{*}\|$ to action $\as_{*}$. When the primitive-guided diffusion denoising or flow velocity field is applied, the resulting action ${\as}_t$ satisfies a strictly tighter error bound:
\begin{equation}\label{eq:error}
{\mathcal{E}}_t \le \frac{1}{1 + C} \cdot \tilde{\mathcal{E}}_t + \mathcal{O}(\tau_t)
\end{equation}
where ${\mathcal{E}}_t = \|{\as}_t - \as_{*}\|$, $C$ is a positive coefficient related to the local curvature of the action manifold and the Lipschitz continuity of the loss function $L$, and $\tau_t$ is the residual term from the stochastic process. 
\end{proposition}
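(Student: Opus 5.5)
The plan is to view the guided update in Eq.~\eqref{eq:primdf} (and equivalently Eq.~\eqref{eq:primfp}) as one gradient step on $\mathcal{L}_{\text{PG}}$ applied to the unguided iterate $\tilde{\as}_t$. We then show that this step contracts the distance to $\as_*$, up to a residual. In both cases the guided action has the form
\[
\as_t=\tilde{\as}_t-\eta\,\nabla_{\as}\mathcal{L}_{\text{PG}}(\hat{\as}(\tilde{\as}_t),\ob),
\]
where $\hat{\as}$ is the clean-action estimate. For diffusion, $\hat{\as}_0$ is an affine function of $\tilde{\as}_{t-1}$ with slope $1/\sqrt{\bar\alpha_t}$. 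For flow, $\hat{\as}_1=\as_t+(1-t)\mathbf{v}_\theta$. So by the chain rule the guidance direction is a Jacobian $J_t$ times $\nabla\mathcal{L}_{\text{PG}}$ evaluated at $\hat{\as}$.

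We write $\hat{\as}=\tilde{\as}_t+r_t$, where $r_t$ collects the stochastic noise term $\epsilon$ (diffusion) or the discretization and velocity-error gap (flow). We bound $\|r_t\|=\mathcal{O}(\tau_t)$ using the schedule coefficients. These contributions are then absorbed into the $\mathcal{O}(\tau_t)$ term via Lipschitz continuity of $\nabla\mathcal{L}_{\text{PG}}$, which holds because the score map $\mathbf{f}$ in Definition~\ref{def:softClassification} is Lipschitz and softmax cross-entropy has bounded Hessian (of order $1/\tau^2$). The remaining problem is the deterministic step $\tilde{\as}_t\mapsto\tilde{\as}_t-\eta\nabla\mathcal{L}_{\text{PG}}(\tilde{\as}_t)$.

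The core step is a local strong-convexity/alignment assumption. We take $\as_*$ to be a stationary point of $\mathcal{L}_{\text{PG}}$, meaning the target action induces the primitive distribution $\hat{\mathbf{y}}$ predicted by PANet. On a neighborhood where the action manifold has curvature parameter $\mu>0$, we assume
\[
\nabla\mathcal{L}_{\text{PG}}(\as)=H(\as-\as_*)+\mathcal{O}(\|\as-\as_*\|^2),\qquad H\succeq\mu I .
\]

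The natural way to obtain the $\frac{1}{1+C}$ form is to treat the guidance as implicit, i.e.\ a proximal step. We write $\as_t=\tilde{\as}_t-\eta\nabla\mathcal{L}_{\text{PG}}(\as_t)$ plus a correction, which is second order in $\eta$ and is also absorbed. Then $(I+\eta H)(\as_t-\as_*)=\tilde{\as}_t-\as_*$, which gives
\[
\|\as_t-\as_*\|\le\|(I+\eta H)^{-1}\|\,\tilde{\mathcal{E}}_t\le\frac{1}{1+\eta\mu}\tilde{\mathcal{E}}_t .
\]
This yields the statement with $C=\eta\mu$, tied to the curvature $\mu$ and to the Lipschitz constant $L$ through the admissible step size $\eta\le 1/L$.

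As a fallback, an explicit step gives the factor $\|I-\eta H\|\le 1-\eta\mu$ when $\eta\le1/L$. Since $1-\eta\mu\le\frac{1}{1+\eta\mu}$ fails in general, the explicit step would instead give the bound with $1+C:=1/(1-\eta\mu)$, which is equally valid.

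The main obstacle is justifying the local strong convexity and the identification of $\as_*$ as a minimizer of $\mathcal{L}_{\text{PG}}$. The map from actions to primitive probabilities involves norms ($\|\mathbf{x}\|$, $\|\mathbf{r}\|$) and is many-to-one, so $\mathcal{L}_{\text{PG}}$ is flat along directions that leave $\mathbf{f}$ unchanged. I would handle this by restricting the contraction to the component normal to the level set of $\mathbf{f}$ (the local action manifold), which is where the ``curvature'' constant $C$ enters. The tangential error is either assumed already controlled by the pretrained policy or folded into the bound by taking the max over components. Stating this as an explicit local regularity assumption is what makes the proposition rigorous.
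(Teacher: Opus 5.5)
Your skeleton matches the paper's. You treat the guided update as one correction step on $\tilde{\as}_t$, assume $\as_*$ is a zero of the guidance and that the guidance is locally strongly monotone there, get a one-step contraction, and put the stochastic leftovers into $\mathcal{O}(\tau_t)$. The paper does this abstractly. It writes $\as_t=\tilde{\as}_t-\eta G(\tilde{\as}_t)+\boldsymbol{\xi}_t$ with $G$ the \emph{entire} guidance field, assumes $G(\as_*)=0$ and $\langle G(\as)-G(\as_*),\as-\as_*\rangle\ge C_0\|\as-\as_*\|^2$, expands $\|\as_t-\as_*\|^2$, and sets $\frac{1}{1+C}=\sqrt{1-2\eta C_0}$.

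The concrete reduction you add on top does not go through. The gap $\hat{\as}-\tilde{\as}_t$ is not an $\mathcal{O}(\tau_t)$ residual. From Eq.~\eqref{eq:primdf}, $\hat{\as}^k_0-\tilde{\as}^k_{t-1}=(\bar\alpha_t^{-1/2}-1)\tilde{\as}^k_{t-1}-\sqrt{(1-\bar\alpha_t)/\bar\alpha_t}\,\varepsilon_\theta$. From Eq.~\eqref{eq:primfp}, $\hat{\as}^k_1-\tilde{\as}^k_{t+\delta}=(1-t-\delta)\mathbf{v}_\theta$. This is the whole remaining denoising or transport displacement, not stochastic noise, and calling it $\mathcal{O}(\tau_t)$ makes the bound vacuous at early steps. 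In the diffusion case the guidance is zero at $\tilde{\as}_{t-1}=\sqrt{\bar\alpha_t}\,\as_*+\sqrt{1-\bar\alpha_t}\,\varepsilon_\theta$, so it pulls toward that point rather than toward $\as_*$. Moreover, the chain-rule factor $J_t=\bar\alpha_t^{-1/2}I$ silently drops out of your argument; it rescales the step to $\eta/\sqrt{\bar\alpha_t}$, which can violate your own condition $\eta\le 1/L$ once $\bar\alpha_t$ is small. The paper never compares $\hat{\as}$ with $\tilde{\as}_t$. The $\hat{\as}$-dependence and the Jacobian sit inside $G$, the assumption is imposed on $G$ itself, and the inner-product expansion does not need the linearized field to be symmetric. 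Place your assumption there instead of reducing to $\nabla\mathcal{L}_{\text{PG}}(\tilde{\as}_t)$.

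The implicit/proximal step is not the update in Eq.~\eqref{eq:primdf} or Eq.~\eqref{eq:primfp}. Converting the explicit step into it costs $\eta\|\nabla\mathcal{L}_{\text{PG}}(\tilde{\as}_t)-\nabla\mathcal{L}_{\text{PG}}(\as_t)\|\le\eta^2L^2\tilde{\mathcal{E}}_t$. That term is proportional to $\tilde{\mathcal{E}}_t$, cannot go into $\mathcal{O}(\tau_t)$, and only yields the factor $\frac{1+\eta^2L^2}{1+\eta\mu}$, so you do not get $C=\eta\mu$. The detour is also unnecessary: your claim that $1-\eta\mu\le\frac{1}{1+\eta\mu}$ fails is wrong, since $(1-x)(1+x)=1-x^2\le 1$. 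Write the explicit step exactly with the averaged Hessian $\bar H=\int_0^1\nabla^2\mathcal{L}_{\text{PG}}(\as_*+s(\tilde{\as}_t-\as_*))\,ds$, where $\mu I\preceq\bar H\preceq LI$ on a convex neighborhood. This gives $\|I-\eta\bar H\|\le 1-\eta\mu\le\frac{1}{1+\eta\mu}$ with no remainder. For a genuine gradient field this is sharper than the paper's route, whose absorption of $\eta^2\|G\|^2$ really yields $\sqrt{1-2\eta C_0+\eta^2L_G^2}$ with $L_G$ a Lipschitz constant of $G$.

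Your flatness remark is correct and is something the paper ignores. $\mathcal{L}_{\text{PG}}$ sees the action only through $\|\mathbf{x}^k\|$, $\mathbf{x}^k_1$, $\|\mathbf{r}^k\|$ and $w^k$, so neither $H\succeq\mu I$ nor the paper's strong monotonicity can literally hold. Also, $\nabla\mathcal{L}_{\text{PG}}$ is not Lipschitz where $\mathbf{x}^k=0$, $\mathbf{r}^k=0$ or $w^k=w^{k-1}$; a Lipschitz $\mathbf{f}$ only makes $\mathcal{L}_{\text{PG}}$ itself Lipschitz. But your remedy does not give the statement. If only the normal component contracts, an error $\tilde{\as}_t-\as_*$ lying along the level set of $\mathbf{f}$ through $\as_*$ is not corrected. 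You then get only $\mathcal{E}_t\le\tilde{\mathcal{E}}_t$, and taking a maximum over components cannot produce a factor below $1$. To reach the inequality as stated you must either assume the tangential error away, or measure error only in the normal directions, or, as the paper does, simply postulate strong monotonicity of the guidance field.
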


\begin{proof}
Both primitive-guided diffusion denoising and primitive-guided flow updates can be written in a unified form as a guided correction step:
\begin{equation}
\mathbf{a}_t = \tilde{\mathbf{a}}_t - \eta G(\tilde{\mathbf{a}}_t) + \boldsymbol{\xi}_t,
\end{equation}
where $G(\cdot)$ is the primitive-induced guidance field, $\eta>0$ is a step size, and $\boldsymbol{\xi}_t$ is a stochastic residual term satisfying
\begin{equation}
\mathbb{E}\|\boldsymbol{\xi}_t\| = \mathcal{O}(\tau_t).
\end{equation}

We assume that the target action $\mathbf{a}_*$ is a fixed point of the guidance field:
\begin{equation}
G(\mathbf{a}_*) = 0,
\end{equation}
and that $G$ is locally strongly monotone in a neighborhood of $\mathbf{a}_*$: there exists a constant $C_0>0$ such that
\begin{equation}
\langle G(\mathbf{a}) - G(\mathbf{a}_*),\, \mathbf{a} - \mathbf{a}_* \rangle \ge C_0 \|\mathbf{a} - \mathbf{a}_*\|^2.
\end{equation}

Consider the error after the guided update:
\begin{equation}
\mathbf{a}_t - \mathbf{a}_* = (\tilde{\mathbf{a}}_t - \mathbf{a}_*) - \eta G(\tilde{\mathbf{a}}_t) + \boldsymbol{\xi}_t.
\end{equation}

Taking squared norms and expectations yields:
\begin{equation}
\mathbb{E}\|\mathbf{a}_t - \mathbf{a}_*\|^2 =\|\tilde{\mathbf{a}}_t - \mathbf{a}_*\|^2 
- 2 \eta \langle G(\tilde{\mathbf{a}}_t), \tilde{\mathbf{a}}_t - \mathbf{a}_* \rangle
+ \eta^2 \|G(\tilde{\mathbf{a}}_t)\|^2 
+ \mathcal{O}(\tau_t^2).
\end{equation}

Using $G(\mathbf{a}_*)=0$, we have
\begin{equation}
\langle G(\tilde{\mathbf{a}}_t), \tilde{\mathbf{a}}_t - \mathbf{a}_* \rangle
= \langle G(\tilde{\mathbf{a}}_t) - G(\mathbf{a}_*), \tilde{\mathbf{a}}_t - \mathbf{a}_* \rangle
\ge C_0 \|\tilde{\mathbf{a}}_t - \mathbf{a}_*\|^2.
\end{equation}

For sufficiently small $\eta$, the quadratic term $\eta^2 \|G(\tilde{\mathbf{a}}_t)\|^2$ can be absorbed into the contraction coefficient, yielding
\begin{equation}
\mathbb{E}\|\mathbf{a}_t - \mathbf{a}_*\|^2 \le (1 - 2 \eta C_0) \|\tilde{\mathbf{a}}_t - \mathbf{a}_*\|^2 + \mathcal{O}(\tau_t^2).
\end{equation}

Taking square roots and choosing $C$ such that
\begin{equation}
\frac{1}{1 + C} = \sqrt{1 - 2 \eta C_0} ,
\end{equation}
we arrive at the final bound:
\begin{equation}
\|\mathbf{a}_t - \mathbf{a}_*\| \le \frac{1}{1 + C} \|\tilde{\mathbf{a}}_t - \mathbf{a}_*\| + \mathcal{O}(\tau_t).
\end{equation}

Since $C>0$, the factor $1 / (1 + C) < 1$, which guarantees accelerated convergence toward the target action $\mathbf{a}_*$.  
\end{proof}

\section{Details Implementation and Primitives Analysis} \label{sec:Implementation}
This section elaborates on additional details regarding the implementation and training of PANet, the primitive classifier proposed in our work.

PANet takes the observation $\ob$ as inputs and outputs the primitive type of the next action, i.e., $y^k=\text{PANet}(\ob^k)$. For implementation, we adopt T5~\cite{2020t5} as the text encoder and DINOv2~\cite{oquab2024dinov2} as the image encoder. Subsequently, we utilize the multi-modal causal Q-Former~\cite{li2023blip} to fuse features from these two modalities, which are then fed into the classification head for class probability output. The Q-Former is configured with 3 layers, an embedding dimension of 1280, a cross-attention dimension of 768, 8 attention heads, and 32 queries. The classification head employs a multi-layer perceptron (MLP) architecture consisting of three MLP layers interleaved with ReLU activation functions (i.e., MLP → ReLU → MLP → ReLU → MLP). The input and output feature dimensions of the classification head are 1280 and 8, respectively: 1280 corresponds to the concatenated dimension of text and image features, while 8 denotes the number of primitive action types. The middle MLP layer is configured with 256 input and output channels. For detailed implementation details, refer to our publicly available codebase at \href{https://github.com/OTCVCG/PriGo}{\textcolor{myblue}{\textbf{PriGo}}}.

For training, PANet is optimized using cross-entropy loss, where the ground-truth labels are obtained via Eq.~(2) in our manuscript. We employ the AdamW optimizer with hyperparameters $\beta_1=0.9$, $\beta_2=0.99$, and a learning rate of $lr=5\times10^{-5}$. 
The model is trained on LIBERO using a single NVIDIA A100 GPU, with a total training time of approximately 1 day.

\textbf{Statistical analysis of primitive actions.}
We conducted preliminary experiments to analyze the distribution of primitive types under different settings, and selected a set of eight primitive actions for all experiments based on these statistics. Specifically, we observed that certain composite primitives are extremely rare, e.g., \textit{grasp + push} accounting for only 0.07\% across all 10 categories. This leads to a long-tail classification problem, degrading both overall accuracy and performance on infrequent classes. Moreover, over-specifying primitives can introduce ambiguity. For instance, splitting rotation into left and right directions is often unnecessary, as both may be valid in many scenarios (e.g., grasping a handle or a cup). Enforcing a fixed label in such cases reduces policy flexibility and introduces label noise. In addition, finer-grained primitives result in greater variation in primitive distributions across tasks, which weakens cross-task generalization. 

For example, when rotation is split into clockwise and counterclockwise directions, the primitive distribution for the task ``\textit{pick up the salad dressing and place it in the basket}'' becomes: \{`0': 14.43\%, `1': 5.05\%, `2': 4.48\%, `3': 8.73\%, `4': 12.85\%, `5': 10.79\%, `6': 2.80\%, `7': 14.51\%, `8': 6.95\%, `9': 12.07\%, `10': 7.34\%\}. For “\textit{pick up the cream cheese and place it in the basket}”, it becomes: \{`0': 13.27\%, `1': 5.14\%, `2': 7.72\%, `3': 10.86\%, `4': 10.86\%, `5': 5.79\%, `6': 11.60\%, `7': 4.23\%, `8': 9.10\%, `9': 6.64\%, `10': 14.79\%\}.  
Notably, the distribution of the last categories changes significantly across tasks.  In contrast, using the original 8 primitives yields more stable distributions. For the first task: {`0': 16.04\%, `1': 6.34\%, `2': 5.16\%, `3': 8.47\%, `4': 13.58\%, `5': 13.94\%, `6': 18.98\%, `7': 17.49\%},  and for the second task: \{`0': 13.34\%, `1': 7.18\%, `2': 7.76\%, `3': 10.91\%, `4': 10.91\%, `5': 16.87\%, `6': 14.30\%, `7': 18.73\%\}.

\section{Additional Results} \label{sec:result}
In this section, we further evaluate and visualize the effectiveness of the proposed primitive guidance, PriGo, on robotic manipulation tasks.

Two optional strategies are available to get discrete class output: selecting the maximum-probability index or conducting probability-based sampling. We conducted experiments on LIBERO to explore the impact of these two methods on subsequent robot manipulation, with the results presented in Table~\ref{tab:LIBERO_SUPP}. While sampling yields a mere 0.2\% performance improvement, this gain is negligible in practice. Given such a limited effect alongside the additional computational overhead introduced by sampling, we instead adopt the simpler and more direct maximum-probability index strategy for our implementation.

To evaluate inference overhead on real robot tasks, we perform a wall-clock time analysis comparing PriGo with the unguided baseline. On a single NVIDIA RTX A6000 GPU, the baseline (CogACT) requires approximately 181 ms per step, while PriGo increases the latency to 208 ms per step, corresponding to a 15\% overhead. However, since each inference cycle generates a sequence of $k=16$ actions, this yields an effective control frequency of 77 Hz, which exceeds the standard real-time requirement for reactive manipulation (typically around 50 Hz). Furthermore, we measure the average task completion time and observe a 10\% reduction with our method (2.56 minutes) compared to the baseline (2.85 minutes). This improvement stems from the enhanced structural coherence of the generated action sequences, which enables smoother subtask execution and reduces the total number of steps required to reach the goal.

\begin{table}[htb!]
\caption{Success rate on LIBERO with four settings (spatial, object, goal, long-horizon). Comparison between maximum-probability index (argmax) and probability-based sampling (sampling).}
\centering
\begin{tabular}{l|c|c|c|c|c}
\toprule
Methods& Spatial  & Object&  Goal &	Long & \cellcolor{ImportantColor}Avg. Succ.\\
\midrule
argmax & 95.6 &97.3 & 95.8 &  80.4     &\cellcolor{ImportantColor} 92.3 \\
sampling & 95.2 & 97.6 &96.3 &  80.9   &\cellcolor{ImportantColor} 92.5 \\
\bottomrule
\end{tabular}
\label{tab:LIBERO_SUPP}
\end{table}

To illustrate the corrective effect of our guidance, we selected five representative frames from the policy rollouts with and without incorporating PriGo for the same task. These results are visualized in Figure~\ref{fig:visulization}, where the top row displays the outcomes of our PriGo-DP and the bottom row shows the baseline diffusion policy CogACT. From the results, it can be observed that our method produces action sequences that align well with the required primitive types: rotation (placing the plate), push and release (placing the bowl onto the plate), and grasping the handle followed by release (placing the cup into the bowl). In contrast, the baseline fails to generate structured action sequences. Specifically, it lacks the required rotation (resulting in an inverted plate), introduces unnecessary rotations (during bowl placement), and exhibits incorrect grasping behavior (grasping the inner wall of the cup) along with premature release in mid-air (during cup placement).

\begin{figure}[htbp!]
\vspace{-1mm}
\centering
\includegraphics[width=1.0\linewidth]{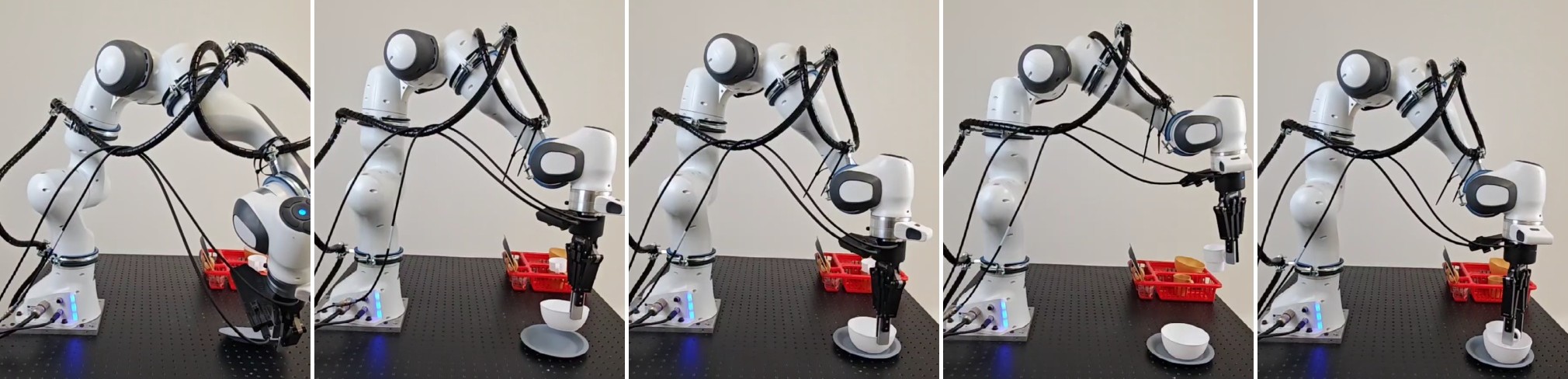}\\
\includegraphics[width=1.0\linewidth]{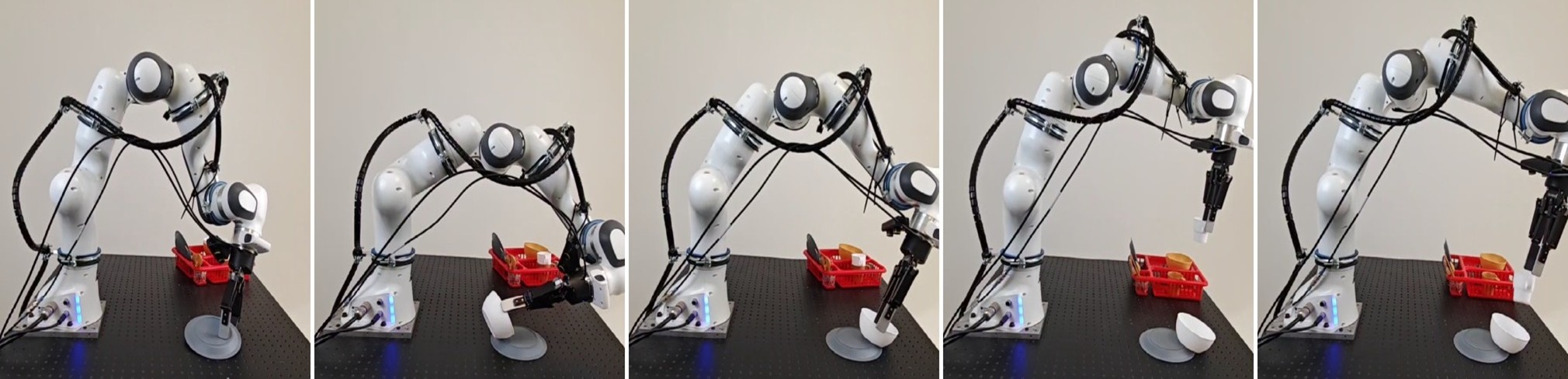}
\caption{Comparison of our PriGo-DP (top row) and the vanilla diffusion policy (bottom row) on the `prepare a set of tableware' task.} 
\label{fig:visulization}
\end{figure}